\documentclass[11pt]{article}

\usepackage{amsfonts,amstext,amsmath,amssymb}
\usepackage{color}
\usepackage{graphicx, graphics}
\usepackage{cite}
\usepackage{color}
\usepackage{geometry}
\usepackage{multirow}
\usepackage{stmaryrd}
\usepackage{ulem}
\usepackage{fancyheadings}

\newtheorem{proposition}{Proposition}[section]

\newtheorem{corollary}{Corollary}[section]
\newtheorem{proof}{Proof}[section]

\newcommand{\Mcal}{\mathcal{M}}
\def\argmin{\mathop{\mathrm{argmin}}}

\title{Variational Bayes approach for model aggregation in unsupervised classification with Markovian dependency}

\author{Stevenn Volant$^{1,2}$, Marie-Laure Martin Magniette$^{1,2,3,4,5}$ and
  St\'ephane Robin$^{1,2}$}

\begin{document}

\maketitle
\begin{center}
$^{1}$AgroParisTech, 16 rue Claude Bernard, 75231 Paris Cedex 05, France. \\ 
$^{2}$INRA UMR MIA 518, 16 rue Claude Bernard, 75231 Paris Cedex 05, 
France. \\ 
$^{3}$INRA UMR 1165, URGV, 2 rue Gaston Cr\'emieux, CP5708, 91057, Evry 
Cedex, France. \\ 
$^{4}$UEVE, URGV, 2 rue Gaston Cr\'emieux, CP5708, 91057, Evry Cedex, 
France. \\ 
$^{5}$CNRS ERL 8196, URGV, 2 rue Gaston Cr\'emieux, CP5708, 91057, Evry 
Cedex, France. 
\end{center}

\begin{abstract}

We consider a binary unsupervised classification problem where each observation is associated with an unobserved label that we want to retrieve. More precisely, we assume that  there are two groups of observation: normal and abnormal. The `normal' observations are coming from a known distribution whereas the distribution of the `abnormal' observations is unknown. Several models have been developed to fit this unknown distribution. In this paper, we propose an alternative based on a mixture of Gaussian distributions. The inference is done within a variational Bayesian framework and our aim is to infer the posterior probability of belonging to the class of interest. To this end, it makes no sense to estimate
the mixture component number since each mixture model provides more or less relevant information to the posterior probability estimation. By
computing a weighted average (named aggregated estimator) over the model collection, Bayesian Model Averaging (BMA) is one way of combining models in order to account for information provided by each model. The aim is then the estimation of the weights and the posterior probability for one specific model. In this work, we derive optimal approximations of these quantities from the variational theory and propose other approximations of the weights. To perform our method, we consider that the data are dependent (Markovian dependency) and hence we consider a Hidden Markov Model. A simulation study is carried out to evaluate the accuracy of the estimates in terms of classification. We also present an application to the analysis of public health surveillance systems.\\

\textbf{Keywords:} Model averaging, Variational Bayes inference, Markov Chain, Unsupervised classification.
\end{abstract}

\newpage

\section{Introduction}

\paragraph{Binary unsupervised classification}

We consider an unsupervised classification problem where each
observation is associated with an unobserved label that we want to
retrieve. Such problems occur in a wide variety of domains, such as
climate, epidemiology (see Cai et al.\cite{Cai2009}), or genomics (see McLachlan et al. \cite{McLachlan2002}) where we want to distinguish
`normal' observations from abnormal ones or, equivalently, to
distinguish pure noise from signal. In such situations, some prior
information about the distribution of `normal' observations, or about
the distribution of the noise is often available and we want to take
advantage of it. \\
More precisely, based on observations $X = \{X_t\}$, we want to
retrieve the unknown binary labels $S = \{S_t\}$ associated with each
of them. We assume that `normal' observations (labelled with 0) have
distribution $\phi$, whereas `abnormal' observations (labelled with 1)
have distribution $f$. We further assume that the null distribution
$\phi$ is known, whereas the alternative distribution $f$ is not. In a
classification perspective, we want to compute
\begin{equation} \label{Tt}
T_t = \Pr\{S_t = 0 | X\}.
\end{equation}

\paragraph{Bayesian model averaging (BMA)}
The probability $T_t$
depends on the unknown distribution $f$.
Many models can be considered to fit this distribution and we denote
$\Mcal = \{f_m; m=1, \ldots, M\}$ a finite collection of such models. As none of
these models is likely to be the true one, it seems more natural to
gather information provided by each of them, rather than to try to
select the `best' one. The Bayesian framework is natural
for this purpose, as we have to deal with model uncertainty. \\
Bayesian model averaging (BMA) has been mainly developed by Hoeting
et al. \cite{hoeting_bayesian_1999} and provides the general framework of
our work.  It has been demonstrated that BMA can improve predictive
performances and parameter estimation in Madigan and Raftery
\cite{madigan_model_1993}, Madigan et
al.\cite{madigan_enhancingpredictive_1995}, Raftery et
al.\cite{raftery_bayesian_1997,volinsky_bayesian_1997} or Raftery and
Zheng \cite{Raftery03long-runperformance}. Jaakkola and Jordan
\cite{jaakkola_improvingmean_1998} also demonstrated that model
averaging provides a gain in terms of classification and fitting.  The
determination of the weight $\alpha_m$ associated with each model $m$
when averaging is a key ingredient of all these approaches.

\paragraph{Weight determination}
 
As shown in Hoeting et al. \cite{hoeting_bayesian_1999} the standard
Bayesian reasoning leads to $\alpha_m = \Pr\{M = m | X\}$, where $M$
stands for the model. In a classical context, the calculation of
$\alpha_m$ requires one to integrate the joint conditional distribution $P(M, \Theta
|X)$, where $\Theta$ is the vector of model parameters, and several approaches can be used. The BIC criterion (Schwarz
\cite{schwarz_estimatingdimension_1978}) is based on a Laplace
approximation of this integral, which is questionable for small
sample sizes. One other classical method is the MCMC (Monte Carlo Markov Chain) \cite{Andrieu03} which samples the distribution and can provide 
an accurate estimation of the joint conditional, but at the cost of huge (sometimes prohibitive) computational time. \\
In the unsupervised classification context, the problem is even more
difficult as we need to integrate the conditional $P(M, \Theta, S |X)$
since the labels are unobserved.  This distribution is generally not
tractable but, for a given model, Beal and
Ghahramani\cite{beal_variational_2003} developed a variational Bayes
strategy to approximate $P(\Theta, S |X)$.  Variational techniques aim
at minimising the Kullback-Leibler (KL) divergence between $P(\Theta,
S|X)$ and an approximated distribution $Q_{\Theta, S}$ (Wainwright and
Jordan\cite{Wainwright2008}, Corduneanu and Bishop\cite{Corduneanu2001}). Jaakkola and Jordan \cite{jaakkola_improvingmean_1998} proved that the variational approximation can be improved by using a mixture of distributions rather than factorised distribution as the approximating distribution. A mixture distribution $Q_{mix}$ is chosen to minimise the KL-divergence with respect to $P(\Theta,S|X)$. Unfortunately, they need to average the log of $Q_{mix}$ over all the configurations which leads to untractable computation and a costly algorithm involving a smoothing distribution must be implemented.

\paragraph{Our contribution}

In this article, we propose variational-based weights for model
averaging, in presence of a Markov dependency between the unobserved
labels.  We prove that these weights are optimal in terms of
KL-divergence from the true conditional distribution $P(M | X)$. To this end, we optimise the KL-divergence between $P(\Theta,
S,M|X)$ and an approximated distribution $Q_{\Theta, S,M}$ (Section \ref{VarWei}). This optimisation problem differs from that of Jaakkola and Jordan (see equation 14 in  \cite{jaakkola_improvingmean_1998}). Based on the approximated distribution of $P(\theta, S| M, X)$, we derive other estimations of the weights. \\
We then go back to the specific case of unsupervised classification and consider a collection $\mathcal{M}$ of mixtures of  parametric exponential family distributions (Section \ref{MixMod}). We propose a
complete inference procedure that does not require any specific
development in terms of inference algorithm.  In order to assess our
approach, we propose a simulation study which highlights the gain of
model averaging in terms of binary classification (Section \ref{Simul}). We also present  an application to the analysis of public health surveillance systems (Section \ref{Appli}).



\section{Variational weights}
\label{VarWei}
\subsection{A two-step optimisation problem}

In a Bayesian Model Averaging context, we focus
on averaged estimator to account for model uncertainty It implies evaluating the conditional distribution:
\begin{eqnarray}
P(M|X) = \int P(H,M|X) dH,
\label{TrueW}
\end{eqnarray}
where $H$ stands for all hidden variables, that is $H=(S,\Theta)$, and $M$ denotes the model.\\

In order to calculate this distribution, we need to compute the joint posterior distribution of $H$ and $M$. Due to the latent structure of the problem this is not feasible but the mean field/variational theory allows one to derive an approximation of this distribution. It has mainly been developed by Parisi \cite{Parisi88} and provides an alternative approach to MCMC for inference problem within a Bayesian framework.
The variational approach is based on the minimisation of the KL-divergence between $P(H,M|X)$ and an approximated distribution $Q_{H,M}$. The optimisation problem can be decomposed as follows: 
\begin{eqnarray}
\min_{Q_{H,M}} KL(Q_{H,M}||P(H,M|X))
&& =  \min_{Q_{M}} \left[KL(Q_{M}||P(M|X))\right. \nonumber \\
&&\left.+ \sum_m Q_M(m) \min_{Q_{H|m}} KL(Q_{H|m}||P(H|X,m)) \right].
\label{ourKL}
\end{eqnarray}
This decomposition separates $Q_{M}$ and $Q_{H|M}$, and so these optimisations can be realised independently. We are mostly interested in $Q_{M}$ which provides an approximation of $P(M|X)$ given in Equation \ref{TrueW}. Furthermore, since the collection $\mathcal{M}$ is finite, we do not need to put any restriction on the form of $Q_{M}$ and may deal with the weights $\alpha_m=Q_{M}(m)$ for each $m \in \mathcal{M}$.
In the following, we will first minimise the KL-divergence with regard to $Q_{M}$ leading to weights that depend on $Q_{H|m}$. In a second step, we will consider the approximation of $P(H|X,m)$. 

\subsection{Weight function of any approximation of $P(H|X,m)$}
\label{Optimal variational weights}
We now consider the optimisation of $Q_{M}$. Proposition \ref{prop1} provides the optimal weights.

\begin{proposition}
\label{prop1}
The weights that minimise $KL(Q_{H,M}||P(H,M|X))$ with respect to $Q_{M}$, for given distributions $\{Q_{H|m}, m \in \mathcal{M}\}$, are
$$
\overline{\alpha}_m(Q_{H|m}) \propto P(m) \exp[-KL(Q_{H|m}||P(H|X,m)) + \log P(X|m)],
$$
with $\sum_m \overline{\alpha}_m(Q_{H|m}) = 1$.
\end{proposition}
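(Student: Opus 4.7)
The plan is to start from the decomposition of the joint KL divergence already given in Equation \ref{ourKL}, treat the distributions $Q_{H|m}$ as fixed, and reduce the problem to a one-variable minimisation over the discrete distribution $Q_M$ on the finite collection $\mathcal{M}$. Writing $c_m := KL(Q_{H|m}||P(H|X,m))$ and using $P(M=m|X) = P(m)P(X|m)/P(X)$, the objective becomes
\begin{equation*}
F(Q_M) \;=\; \sum_m Q_M(m)\log Q_M(m) \;-\; \sum_m Q_M(m)\bigl[\log P(m) + \log P(X|m) - c_m\bigr] \;+\; \log P(X).
\end{equation*}
The last term is constant in $Q_M$, so it plays no role in the optimisation.

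The key trick is to recognise the remaining expression as, up to an additive constant, the KL divergence between $Q_M$ and a well-chosen target distribution on $\mathcal{M}$. Define
\begin{equation*}
\tilde{p}(m) \;=\; \frac{1}{Z}\, P(m)\exp\bigl[\log P(X|m) - c_m\bigr],\qquad Z \;=\; \sum_{m'} P(m')\exp\bigl[\log P(X|m') - c_{m'}\bigr],
\end{equation*}
so that $\tilde{p}$ is a bona fide probability on the finite set $\mathcal{M}$. A direct substitution gives $F(Q_M) = KL(Q_M || \tilde{p}) - \log Z + \log P(X)$, where only the first term depends on $Q_M$.

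Since the Kullback--Leibler divergence is nonnegative and vanishes precisely when its two arguments agree, $F$ is minimised by taking $Q_M = \tilde{p}$. This immediately yields
\begin{equation*}
\overline{\alpha}_m(Q_{H|m}) \;\propto\; P(m)\exp\bigl[-KL(Q_{H|m}||P(H|X,m)) + \log P(X|m)\bigr],
\end{equation*}
together with the normalisation $\sum_m \overline{\alpha}_m(Q_{H|m}) = 1$, which is exactly the claim of Proposition \ref{prop1}. The whole argument is essentially bookkeeping; the only mildly delicate step is noticing that, even though $\log P(X|m)$ and $c_m$ look like inaccessible quantities, they enter only as fixed numbers once $Q_{H|m}$ is held fixed, which is what makes the reduction to a closed-form KL minimisation possible. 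No obstacle of substance arises because $\mathcal{M}$ is finite, so we need not impose any parametric restriction on $Q_M$ and the minimiser can be taken to be the target distribution itself.
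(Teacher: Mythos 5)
Your proof is correct and takes essentially the same route as the paper's: both isolate the part of the joint KL divergence that depends on $Q_M$, holding the $Q_{H|m}$ fixed, and read off the closed-form minimiser. The only cosmetic difference is in the last step --- the paper reduces the objective to $\sum_m Q_M(m)\left[ KL(Q_{H|m}||P(H,X|m)) + \log Q_M(m) - \log P(m)\right]$ and invokes constrained minimisation subject to $\sum_m Q_M(m)=1$, whereas you recognise the objective as $KL(Q_M||\tilde{p})$ plus a constant and apply nonnegativity of the KL divergence, which has the minor advantage of handling the constraints $Q_M(m)\geq 0$ automatically.
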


\begin{proof}
$KL(Q_{H,M}||P(H,M|X))$ can be rewritten as: 
\begin{eqnarray*}
&&\sum_m \int  Q_{H|m}(h) Q_{M}(m) \log \left[ \frac{Q_{H|m}(h) Q_{M}(m)}{P(h,m,X)/P(X)}\right] dh\\ 
&=&\sum_m \int  Q_{H|m}(h) Q_{M}(m) \left[  \log Q_{H|m}(h)+ \log Q_{M}(m)+ \log P(X) - \log P(h,m,X)  \right] dh\\ 
& = & \sum_m \left(\int  Q_{H|m}(h) Q_{M}(m) \left[  \log \frac{Q_{H|m}(h)}{P(h,X|m)} +  \log Q_{M}(m)- \log P(m) \right]dh\right) + \log P(X)\\   
& = &\sum_m \left(Q_{M}(m) \left[  KL ( Q_{H|m}||P(H,X|m)) + \log Q_{M}(m)- \log P(m)  \right] \right)+\log P(X)
\end{eqnarray*}
The miminisation with respect to $Q_{M}$ subject to $\sum_m Q_{M}(m) = 1$ gives the result.
\end{proof} 

Note that if $Q_{H|m}=P(H|X,m)$ then KL-divergence in the exponential is 0, so $\overline{\alpha}_m$ resumes to $P(m|X)$.

\subsection{Weights based on the optimal approximation of $P(H|X,m)$}

We now derive three different weights from the variational Bayes approximation.

\paragraph{Full variational approximation}
To solve the optimisation problem \ref{ourKL} we still need to minimise the divergence \\ $KL(Q_{H|m}||P(H|X,m))$ for each model $m$, where $H=(S,\Theta)$.

Due to the latent structure, the optimisation cannot be done directly. When $P(X,S|\Theta,M)$ belongs to the exponential family and if $P(\Theta|M)$ is the conjugate prior, the Variational Bayes EM (VBEM: Beal and Ghahramani\cite{beal_variational_2003}) algorithm allows us to minimise this KL-divergence within the class of factorised distributions: $\mathcal{Q}_m = \{Q_{H|m}: Q_{H|m} = Q_{S|m}Q_{\Theta|m}\}$. Due to the restriction, the optimal distribution 
$$
Q_{H|m}^{VB} = \arg\min_{Q \in \mathcal{Q}_m} KL(Q_{H|m}||P(H|X,m))
$$
is only an approximation of $P(H|X,m)$. This allows us to define the optimal variational weights.

\begin{corollary}
\label{OW}
The weights $\widehat{\alpha}^{VB}_m$ achieving the optimisation problem \ref{ourKL} for factorised conditional distribution $Q_{H|m}$ are:
$$
\widehat{\alpha}^{VB}_m  \propto  P(m) \exp[-\min_{Q_{H|m} \in \mathcal{Q}_m } KL(Q_{H|m}||P(H|X,m)) + \log P(X|m)].
$$
\end{corollary}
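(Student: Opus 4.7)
The corollary follows almost immediately from Proposition \ref{prop1} once the decomposition displayed in equation \ref{ourKL} is exploited carefully. The plan is the following.

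First, I would re-read that decomposition and observe two structural facts: the weights $Q_M(m) \geq 0$ enter only as non-negative coefficients in front of each inner $KL(Q_{H|m}||P(H|X,m))$, and the conditional approximations $\{Q_{H|m}\}_m$ are not coupled across models. As a consequence, for every fixed $Q_M$, minimising the bracket on the right-hand side of equation \ref{ourKL} jointly over $\{Q_{H|m}\}_m$ decouples into $M$ independent problems, one per model. Constraining each $Q_{H|m}$ to lie in the factorised class $\mathcal{Q}_m$ does not disrupt this separation, so the inner minimisations can be performed inside $\mathcal{Q}_m$ independently of the outer variable $Q_M$.

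Next, I would invoke the VBEM result of Beal and Ghahramani to guarantee that the inner minimiser within $\mathcal{Q}_m$ is attained; this is the distribution $Q_{H|m}^{VB}$ introduced just above the corollary, and by its very definition
$$
KL(Q_{H|m}^{VB}||P(H|X,m)) = \min_{Q_{H|m} \in \mathcal{Q}_m} KL(Q_{H|m}||P(H|X,m)).
$$
Substituting these inner minima into the outer objective reduces the remaining problem to one of minimising $KL(Q_{H,M}||P(H,M|X))$ over $Q_M$ alone, for the particular choice $Q_{H|m} = Q_{H|m}^{VB}$. But this is exactly the setting of Proposition \ref{prop1}; applying that proposition with $Q_{H|m} = Q_{H|m}^{VB}$ and using the identity above yields
$$
\widehat{\alpha}^{VB}_m \;=\; \overline{\alpha}_m(Q_{H|m}^{VB}) \;\propto\; P(m) \exp\!\left[-\min_{Q_{H|m} \in \mathcal{Q}_m} KL(Q_{H|m}||P(H|X,m)) + \log P(X|m)\right],
$$
which is the claim.

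The only point of the argument deserving explicit mention is the legitimacy of exchanging the outer and inner optima in equation \ref{ourKL} under the extra constraint $Q_{H|m} \in \mathcal{Q}_m$. This rests solely on the non-negativity of the $Q_M(m)$ and the separability of the inner sum across models, and is really the content hidden behind the phrase "achieving the optimisation problem \ref{ourKL}" in the statement. No further obstacle is expected, since the existence of the minimiser $Q_{H|m}^{VB}$ within $\mathcal{Q}_m$ for exponential-family emission models with conjugate priors is precisely the setting in which the VBEM algorithm is known to converge to a stationary point of the factorised variational problem.
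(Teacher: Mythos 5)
Your argument is correct and matches the paper's (implicit) reasoning exactly: the paper presents Corollary \ref{OW} as an immediate consequence of Proposition \ref{prop1} evaluated at the inner minimiser $Q_{H|m}^{VB}$ over the factorised class $\mathcal{Q}_m$, which is precisely what you do, including the (valid) justification that the inner minimisations decouple across models because the coefficients $Q_M(m)$ are non-negative. No gap to report.
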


\paragraph{Plug-in weights}
\label{otherW}
The weights $\alpha_m = \Pr\{M = m | X\}$ can be estimated by using a plug-in estimation based on a direct application of Bayes' theorem. The conditional probability $P(m|X)$ is proportional to $P(X|m)$ that equals to  ${P(X|m,\Theta) P(\Theta|m)}/{P(\Theta|X,m)}$ for any value of $\Theta$, which avoids integrating over $S$. The distribution $Q_{\Theta|m}^{VB}$ resulting from the VBEM algorithm is an approximation of $P(\Theta|X,m)$. Setting $\Theta$ at its (approximate) posterior mean $\theta^* = \mathbb{E}_{Q^{VB}_{\Theta}}(\Theta)$, we define the following plug-in estimate
\begin{eqnarray}
\widehat{\alpha}^{PE}_m \propto P(m) \frac{P(X|m,\theta^*) P(\theta^*|m)}{Q_{\Theta|m}^{VB}(\theta^*)}.
\end{eqnarray}

\paragraph{Importance sampling}

The weights given in Corollary \ref{OW} are based on an approximation of the conditional distribution $P(H|X)$. But, the weights defined in \ref{TrueW} can be estimated via importance sampling (Marin and Robert \cite{marin_importance_2009}). For any distribution $R$, we have
\begin{eqnarray*}
P(m|X) \propto \int P(m) \frac{P(X|h,m)P(h|m)}{R(h)} R(h) dh.
\label{ImpSamp2}
\end{eqnarray*}
Importance sampling provides an unbiased estimator of $P(m|X)$. The importance function $R$ can be chosen to minimise the variance of the estimator. The minimal variance is reached when $R(H)$ equals $P(H|X)$ \cite{marin_importance_2009}. Thus, in the variational framework, the approximated posterior distribution $Q_{H|m}^{VB}$ is a natural choice for the importance function $R$, leading to the following weights:
\begin{eqnarray*}
 \widehat{\alpha}^{IS}_m \propto P(m) \frac{1}{B} \sum_{b=1}^B \frac{P(X|H^{(b)},m)P(H^{(b)})}{Q_{H|m}^{VB}(H^{(b)})},
\qquad
\{H^{(b)}\}_{b = 1, \ldots, B} \mbox{ i.i.d. } \sim Q_{H|m}^{VB}.
\end{eqnarray*}
Although this estimate is unbiased, when the number of observations is large, it may require a long computational time to get a reasonably small variance.\\


\section{Unsupervised classification}
\label{MixMod}

\subsection{Binary hidden Markov model}
We now come back to the original binary classification problem with Markov dependence between the labels. To this aim we consider a classical hidden Markov model (HMM). We assume that $\{S_t\}_{1 \leq t \leq n}$ is a first order Markov chain with transition matrix $\Pi = \{\pi_{ij}; i,j=0,1\}$. The observed data $\{X_t\}_{1 \leq t \leq n}$ are independent conditionally to the labels. We denote $\phi$ the emission distribution in state 0 ('normal') and $f$ the emission distribution in state 1 (`abnormal'). We recall that the function $\phi$ is known whereas $f$ is unknown and we consider the collection $\Mcal = \{f_m; m=1, \ldots, M\}$ where $f_m$ is a mixture of $m$ components:
$$
f_m(x)  = \sum_{k=1}^m p_k \phi_k(x), \qquad \mbox{with } \sum_{k=1}^m p_k = 1.
$$ 
This collection is large as it allows us to fit the data from a two-component mixture (see McLachlan et al. \cite{McLachlan2002})  to a semi-parametric kernel-based density (see Robin et al. \cite{robin_semi-parametric_2007}). When $f$ is approximated by a mixture of $m$ components, the initial binary HMM with latent variable $S$ can be rephrased as an $(m+1)$-state HMM with hidden Markov chain $\{Z_t\}$ taking its values in $\{0, \dots, m\}$ with transition matrix
$$
\Omega = 
\left(
\begin{array}{cccc}
\pi_{00} & \pi_{01}p_1 & \ldots &  \pi_{01}p_m \\
\pi_{10} & \pi_{11}p_1 & \ldots &  \pi_{11}p_m \\
\vdots & \vdots  & \vdots  & \vdots \\
\pi_{10} & \pi_{11}p_1 & \ldots &  \pi_{11}p_m 
\end{array}
\right).
$$
The observed data $\{X_t\}_{1 \leq t \leq n}$ are independent conditionally to the $\{Z_t\}$ with distribution 
$$
X_t|Z_t \sim \phi_{Z_t},
$$ 
where $\phi_0 = \phi$.
Hence, we have two latent variables $Z$ and $S$ which correspond to the group within the whole mixture and to the binary classification, respectively. 

\subsection{Variational Bayes inference}

The VBEM (Beal and Ghahramani\cite{beal_variational_2003}) aims at minimising the KL-divergence in exponential family/conjugate prior context. The quality of the VBEM estimators has been studied in Wang and Titterington (\cite{wang_convergence_2004},\cite{wang_inadequacy_2004},\cite{Wang03}) for mixture models. Wang and Titterington \cite{and_lack_2003} have also studied the quality of variational approximation for state space models. The VBEM algorithm has been studied by McGrory and Titterington\cite{mcgrory_variational_2006} for the HMM with emission distributions belonging to the exponential family.
In these articles, the authors have demonstrated the convergence of the variational Bayes estimator to the maximum likelihood estimator, at rate $\mathcal{O}(1/n)$. They also show that the covariance matrix of the variational Bayes estimators is underestimated compared to the one obtained for the maximum likelihood estimators. 

In our case, $P(X, S | \Theta, M)$ does not belong to the exponential family whereas  $P(X, Z |\Theta, M)$ does. We will therefore make the inference on the $(m+1)$-state hidden Markov model involving $Z$ rather than the binary hidden Markov model involving $S$. Despite the specific form of the transition matrix $\Omega$, it does not modify the framework of the exponential family/conjugate prior. To be specific, $\log P(X, Z | \Theta, M)$ can be decomposed as $\log P(Z | \Theta, M)  + \log P(X | Z, \Theta, M)$ and only the first term involves $\Omega$:

\begin{eqnarray}
\label{PZ}
	\log P(Z|\Theta,M) & = & \sum_{k=1}^m \sum_{j=1}^m N_{kj} \log \pi_{11} + N_{00} \log \pi_{00} + \sum_{k=1}^m N_{k0} \log \pi_{10} \nonumber \\
&& + \sum_{j=1}^m N_{0j} \log \pi_{01} + \sum_{k=1}^m Z_{1k} \log q_1 + Z_{10} \log q_0  \nonumber \\
&&  +\sum_{k=0}^m \sum_{j=1}^m N_{kj} \log p_j +\sum_{k=1}^m Z_{1k} \log p_k,
\end{eqnarray}
with $N_{kj} = \sum_{t \geq 2} Z_{t-1,k} Z_{tj}$ and $q$ is the stationary distribution of $\Pi$.
Since $\log P(Z | \Theta, M)$ can be written as a scalar product $\Phi.u(Z)$ with $\Phi$ the vector of parameters and $u(Z)$ the vector containing the  $\{N_{kj}\}_{1 \leq k,j \leq m}$ and the sums over $Z$, it shows that $Z | \Theta, M$ belongs to the exponential family and that this specific form of $\Omega$ only affects the updating step of hyper-parameters.

\subsection{Model averaging}

For each model $m$ from the collection $\mathcal{M}$, the VBEM algorithm provides the optimal distributions $Q^{VB}_{H|m}$, from which we can derive the three weights defined in Section \ref{VarWei}: $\widehat{\alpha}_m^{VB}$, $\widehat{\alpha}_m^{PE}$ and $\widehat{\alpha}_m^{IS}$. Based on these weights, we can get an averaged estimate of the distribution $f$:
$$
\widetilde{f}^{\mathcal{A}} = \sum \widehat{\alpha}^{\mathcal{A}}_m \widehat{f}_m,
$$
where $\mathcal{A}$ corresponds to one of the proposed approaches (VB, PE or IS). Although the largest model only involves $M$ components, the averaged distribution is a mixture with ${M(M+1)}/{2}$ components. As we are mostly interested in the estimation of the posterior probability $T_t$ defined in \ref{Tt}, we  similarly define its averaged estimate:
$$
\widetilde{T}_t^{\mathcal{A}} = 1 - \sum_m \widehat{\alpha}^{\mathcal{A}}_m \mathbb{E}_{Q^{VB}_{Z|m}}(S_{t}),
$$

where $\mathbb{E}_{Q^{VB}_{Z|m}}(S_{t})$ corresponds to the expected  value of $S$ calculated with the optimal variational posterior distribution of $Z$. This expectation does not depend on $\mathcal{A}$.


\section{Simulation study}
\label{Simul}

In this section, we study the efficiency of the estimators defined in the previous sections. First, we study the accuracy of $\alpha^{VB}$ and $\alpha^{PE}$ in terms of weight estimation. Then, we focus on the accuracy from a classification point of view. We therefore liken the averaged estimator of the posterior probability $T_t$ to the theoretical one. We also compare the averaging approach with a classical two-state HMM and with the HMM which has the highest weight calculated with the importance sampling approach, called throughout the paper ``selected HMM''.

 \subsection{Simulation design}
\label{Simulation design}
We simulate a binary HMM as described in Section \ref{MixMod}, where $f$ is non Gaussian and define as the probit transformation of  a uniform-distribution on $[0,\frac{1}{c}]$, with c $\in [5,7,10,15]$. The difficulty of the problem decreases with the parameter $c$. We also consider four different transition matrices which have the same form given by:

\begin{eqnarray}
\Pi_u = 
\left(
\begin{array}{cc}
1-lu & lu \\
l(1-u) & 1-l(1-u) \\
\end{array}
\right)
\end{eqnarray}
where $l$ is the shifting rate which varies from $0$ to $1$ and $u$ corresponds to the proportion of the group of interest and is chosen within $\{0.05,0.1,0.2,0.3\}$. For each of the 16 configurations we generate $P=100$ samples of size $n=100$.  The inference is done in a semi-homogeneous case: for each simulation condition, we fit a 7-component Gaussian mixture with common variance $\sigma^2$ and mean $\mu_k$ for the alternative. In a Bayesian context, the parameters are random variables with prior distributions. These distributions are chosen to be consistent with the exponential conjugate family. We denote by $\lambda$ the precision parameter, $\lambda = \frac{1}{\sigma^2}$, we have:

 \begin{itemize}
	\item Transition matrix: For $j=1,2$, $\pi_{j.} \sim \mathcal{D}(1,1)$.
        \item Mixture proportions: $p \sim \mathcal{D}(1, \ldots, 1)$.
	\item Precision: $\lambda \sim \Gamma(0.01,0.01)$.
	\item Means: $\mu_k|\lambda \sim \mathcal{N}\left(0,\frac{1}{0.01 \times \lambda}\right)$
\end{itemize}

\subsection{Results}

We present the results for $l=0.6$. We considered other values for this parameter but the performances are almost similar.  

\subsubsection{Accuracy of the weight}

We consider the importance sampling as a reference for weight estimation as it provides an unbiased estimate of the true weights whatever the approximation. We compared it to VB and PE weights by calculating the total variation distance, which  quantifies the dissimilarity between two distributions  $\alpha^1$ and $\alpha^2$: 

\begin{eqnarray}
\delta(\alpha^1,\alpha^2) = \frac{1}{2} \sum_x |\alpha^1(x) - \alpha^2(x)|.
\label{totalvaria}
\end{eqnarray}
The closer to $0$ this distance is, the better the estimation of the weights.

\begin{table}[h!]
\scriptsize
\begin{center}
$$
\begin{array}{|c|c|c||c|c||c|c||c|c|}
\hline
& \multicolumn{8}{c|}{u}\\
\hline
 u & \multicolumn{2}{c||}{0.05} &\multicolumn{2}{c||}{0.1}&\multicolumn{2}{c||}{0.2}&\multicolumn{2}{c|}{0.3}\\
\hline
c & PE & VB & PE & VB & PE & VB & PE & VB \\
\hline
5      &0.419  &0.069 	&0.370  &0.101	&0.453  &0.120  &0.456  &0.069\\
7      &0.438  &0.096 	&0.403  &0.101  &0.287  &0.101  &0.257  &0.072\\
10     &0.386  &0.092 	&0.271  &0.180  &0.232  &0.115  &0.107  &0.092\\
15     &0.372  &0.093	&0.303  &0.158	&0.258  &0.129  &0.102  &0.101\\

\hline
\end{array}
$$
\caption{Total variation distance between the estimated weights with respect to importance sampling for each value of $u$ and $c$.}
\label{DistWeight}
\end{center}
\end{table}

Table \ref{DistWeight} shows that VB weights are  the closest to IS weights. The total variation distance $\delta(\alpha^{VB},\alpha^{IS})$ is close to 0 whatever the simulation study. In contrast, the PE weights seem not to be correct for approximating the true weights except when the two populations are well separated.
These trends are also brought out when we focus on the weights calculated for the $P$ samples given a simulation condition. On average, compared to the PE approach, the VB method tends to provide weight estimations close to those of the IS approach. For instance, for $c=7$ and $u=0.2$, they mix three models with a huge weight ($\approx 0.70$) for $f_1$  and weights around $0.15$ for $f_2$ and $f_3$. However, the VB method has more stable estimated weights than IS. 
PE is the more stable approach among the three but it tends to only select the two-component model with an average weight around $0.95$.

\paragraph{Conclusion on the weight estimation}

By directly analysing the weight estimation, the similarities between the IS and the VB methods have clearly appeared. The VB method provides a good estimation of the true weights which is not the case for PE. Hence, when the computational time of the IS method is becoming very high, we get a real advantage by using the VB method in terms of weight estimation.

\subsubsection{Accuracy of the posterior probabilities}

Once the weights have been estimated, the averaged estimates of the posterior probabilities $T_t$ are computed for each approach. The aim of the VB method is to cluster the data into two populations. In many cases, these populations are difficult to distinguish but some observations are easily classifiable without any statistical approach. Hence, we put aside observations  with a theoretical probability of belonging to the cluster of interest smaller than $0.2$ or higher than $0.8$.
A classical indicator to measure the quality of a given classification is the MSE (Mean Square Error) which evaluates the difference between the averaged estimate $\widetilde{T}^{\mathcal{A}}$ of one method of  $\mathcal{A}$ and the theoretical values $T^{(th)}$. 
\begin{eqnarray}
 MSE^{\mathcal{A}} = \frac{1}{P}\sum_{p=1}^{P} \frac{1}{n}\sum_{t=1}^n (\widetilde{T}_{t,p}^{\mathcal{A}}-T_{t,p}^{(th)})^2,
\label{RMSE2}
\end{eqnarray}
The $MSE^{\mathcal{A}}$ estimation allows us to evaluate the quality of the estimates provided by Model $m$ over all datasets $p=\{1,...,P\}$  and one approach of $\mathcal{A}$. The smaller the MSE, the better the performances are.\\

Since we deal with synthetic data, we can look at the best achievable MSE. This aims at minimising the MSE within the averaged estimator family to obtain an oracle weight We denote this oracle by $\alpha^*$ and we have:
\begin{eqnarray}
\alpha^* = \argmin_{\alpha} || T^{(th)} - \sum_{m=1}^M \alpha_m \widehat{T}^{(m)}||_2,
\end{eqnarray}
with $\sum_{m=1}^M \alpha_m = 1$ and $\forall m \in \{1,...,M\}$,  $0\leq \alpha_m \leq 1$. The variable $\widehat{T}^{(m)}$ is the estimation of $T$ supplied by model $m$. This oracle can be viewed as the weights we would choose if the theoretical posterior probability of belonging to the group of interest were known. This oracle estimator is obtained by a functional regression under non-negativity constraint and it can be written as:
\begin{eqnarray}
\alpha^* = (\widehat{T}'\widehat{T})^{-1} \widehat{T}'T^{(th)} \times \gamma
\end{eqnarray}
where $\gamma$ is a normalising constant and $\widehat{T}$ is the matrix containing the estimates $\widehat{T}^{(m)}$ for all model $m$. Several algorithms allow one to calculate this estimator numerically by taking constraints into account. In this article, the optimisation has been achieved by the Newton-Raphson algorithm.

\begin{figure}
\begin{center}
\includegraphics[height=12cm,width=15cm]{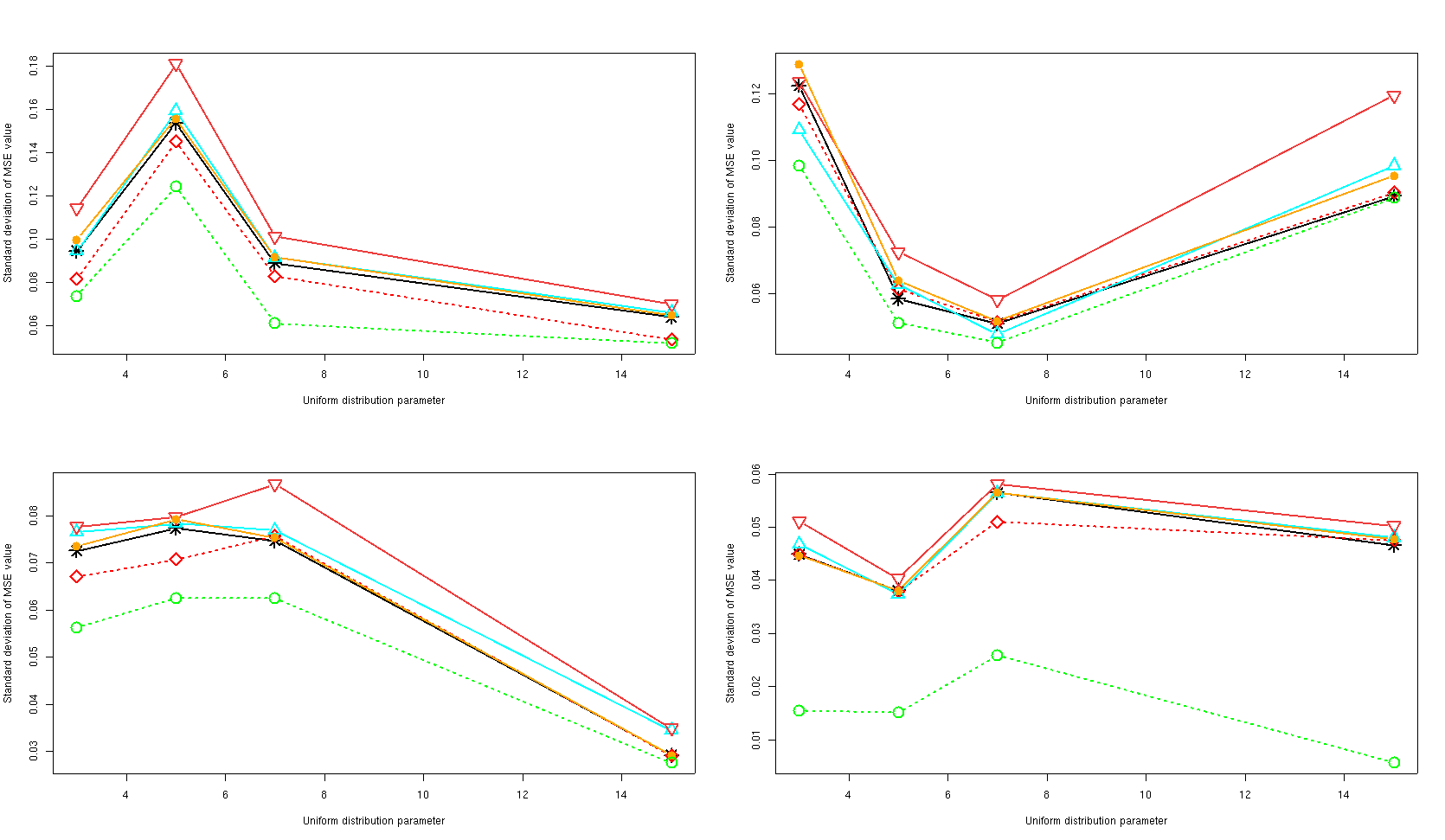}
\caption{Mean square error (MSE) between the true posterior probabilities and the estimates as a function of the uniform parameters. Methods: "$\Delta$": PE, "$\nabla$": two-state-HMM, "*": IS, "$\bullet$":  Selected HMM, "$\Diamond$": VB, "O" : Oracle. Top left: $\Pi_{0.05}$, Top right: $\Pi_{0.1}$, Bottom left: $\Pi_{0.2}$, Bottom right: $\Pi_{0.3}$. VB and Oracle are in dotted lines.}
\label{MSE}
\end{center}  
\end{figure}

Figure \ref{MSE} displays the MSE calculated for the different methods under the various simulation conditions. First, we notice that the VB method based on the optimal variational weights provides good results in most of the cases. Moreover, we observe that an averaging approach with either the IS or VB method provides better results than the selected HMM. We observe that the PE method and the two-state-HMM  provide the worse estimates for many simulation conditions than do the VB and IS methods. 
Another comment is that there is no method which is the best whatever the simulation condition. Moreover, the estimations get closer to the oracle estimator when the problem is becoming easier.\\
Figure \ref{SdMSE} shows the standard deviation of the MSE over all the simulation conditions. We notice that the VB method has one of the lowest variabilities. Once more, the two-state HMM has the worst performances.\\

\begin{figure}
\begin{center}
\includegraphics[height=12cm,width=15cm]{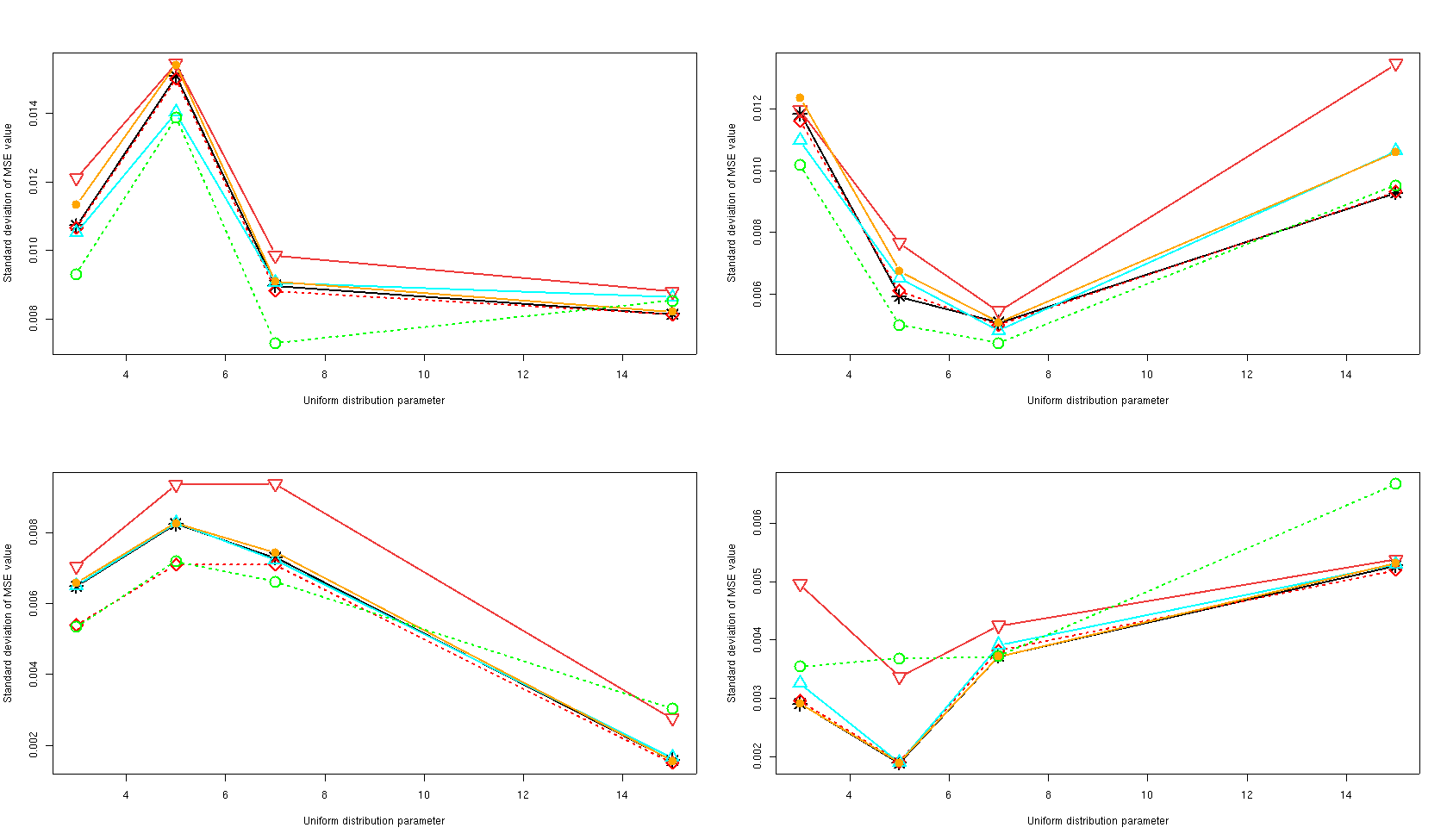}
\caption{Standard deviation of the MSE. Methods: "$\Delta$": PE, "$\nabla$": two-state-HMM, "*": IS, "$\bullet$":  Selected HMM, "$\Diamond$": VB, "O" : Oracle. Top left: $\Pi_{0.05}$, Top right: $\Pi_{0.1}$, Bottom left: $\Pi_{0.2}$, Bottom right: $\Pi_{0.3}$. VB and Oracle are in dotted lines.}
\label{SdMSE}
\end{center}  
\end{figure}

\begin{table}
\scriptsize
\begin{center}
$$
\begin{array}{|c|c|c|c|c|c|}
\hline
& \multicolumn{5}{c|}{u=0.05} \\
\hline
c & PE & VB & IS &Selected HMM & Oracle\\
\hline
5	&0.44 (0.04) &\textbf{0.36 (0.03)} &0.38 (0.04)	& 0.42 (0.03)&	0.31 (0.02)\\
7	&0.54 (0.04) &\textbf{0.42 (0.04)} &0.43 (0.04)	& 0.47 (0.03)	&0.34 (0.02)\\
10	&0.35 (0.04) &\textbf{0.30 (0.04)} &\textbf{0.30 (0.04)} & 0.34 (0.04)&	 0.21 (0.03)\\
15	&0.38 (0.04) &0.34 (0.04) &\textbf{0.33 (0.04)}	& 0.36 (0.03)&	0.23 (0.03)\\
\hline
\hline
 &\multicolumn{5}{c|}{u=0.1}\\
\hline
c & PE & VB & IS &Selected HMM & Oracle\\
\hline
5	&0.40 (0.04) &\textbf{0.37 (0.03)} &0.39 (0.03)& 0.39(0.03)&	 0.29 (0.03)\\
7	&0.29 (0.03) &\textbf{0.23 (0.03)} &\textbf{0.23 (0.03)}& 0.25 (0.03)&	 0.17 (0.02)\\
10	&0.28 (0.03) &0.28 (0.03) &\textbf{0.23 (0.03)}& 0.28 (0.03)&	 0.16 (0.02) \\
15	&0.25 (0.04) &0.22 (0.03) &\textbf{0.20 (0.03)}& 0.22 (0.03)&	 0.17 (0.02) \\
\hline
\hline
&\multicolumn{5}{c|}{u=0.2}\\
\hline
c & PE & VB & IS &Selected HMM & Oracle  \\
\hline
5	&0.33 (0.03) &\textbf{0.29 (0.03)} &0.30 (0.03) & 0.31 (0.03)&	 0.19 (0.02) \\
7	&0.26 (0.03) &\textbf{0.23 (0.02)}  &0.24 (0.02)& 0.25 (0.02)&	0.18 (0.02)\\
10	&0.23 (0.03) &0.20 (0.02)  &\textbf{0.19 (0.02)} & 0.23 (0.01)&	 0.17 (0.02)\\
15	&0.08 (0.01) &0.09 (0.01)  &\textbf{0.07 (0.01)} & 0.09 (0.01)	&0.06 (0.02)\\
\hline	
\hline	
&\multicolumn{5}{c|}{u=0.3}\\
\hline
c & PE & VB & IS &Selected HMM & Oracle  \\
\hline
5	&0.23 (0.02) &\textbf{0.19 (0.01)} &0.20 (0.01)& 0.22 (0.01)&	0.16 (0.01) \\
7	&0.13 (0.01) &\textbf{0.11 (0.01)} &0.12 (0.01) & 0.13 (0.01)&	0.09 (0.01)\\
10	&0.17 (0.02) &0.12 (0.01) &\textbf{0.11 (0.01)} & 0.18 (0.01)	&0.03 (0.01)\\
15	&0.12 (0.01) &0.10 (0.01) &\textbf{0.09 (0.01)} & 0.12 (0.01)&	0.06 (0.01)\\
\hline

\end{array}
$$
\caption{Mean(sd) of the misclassification rate for the three averaging approaches.}
\label{Missclass}
\end{center}
 \normalsize
\end{table}

Table \ref{Missclass} includes information on the misclassification for the three averaging approaches. The misclassification rate is calculated on the $P$ samples whatever the simulation condition. The values in bold correspond to the smallest misclassification rate among the PE, VB and IS approaches. First, we note that the VB and the IS methods have very similar misclassification rates whatever the simulation condition. Moreover, this rate corresponds to the best rate of the three averaging methods.  The averaged estimator supplied by the plug-in weights estimation seems to misclassify more data than the other approaches.  Once again, Table \ref{Missclass} shows us that the VB approach provides good results when the simulation condition is complicated. In fact, when $c$ equals either 5 or 7, the averaging method based on optimal variational weights provides the lowest misclassified rate among the three averaging approaches. Since the misclassification rate of the oracle is close to the rates obtained by VB and IS estimation, the two approaches provide good results for each value of $c$ and $u$. An other comment is that the selected HMM approach always prodives worse results than the IS and VB ones. This means that the averaging approach brings a gain to the posterior probability estimation.

Figure \ref{FigEntropy} shows the entropy of the weights. We note that the optimal variational weights have one of the largest entropies among all the proposed weights. This means that the VB method tends to mix several  models. Contrary to the other  three weights, PE has a low entropy. This method seems to select only one model to infer posterior probability and does not take others into account.\\

\begin{figure}
\begin{center}
\includegraphics[height=12cm,width=15cm]{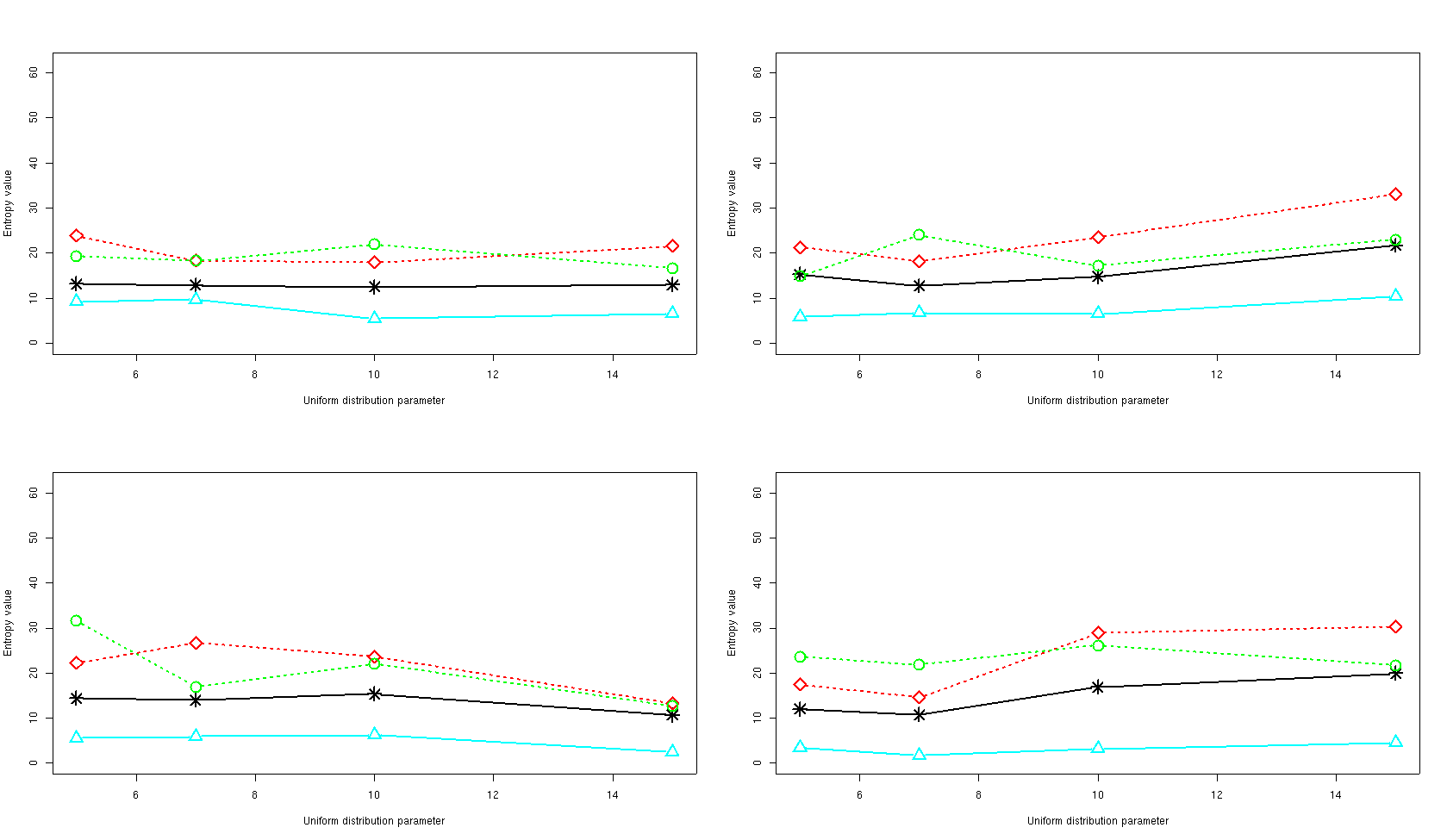}
\caption{Entropy of the weights. Methods: "$\Delta$": PE, "$\nabla$": two-state-HMM, "*": IS, "$\bullet$":  Selected HMM, "$\Diamond$": VB, "O" : Oracle. Top left: $\Pi_{0.05}$, Top right: $\Pi_{0.1}$, Bottom left: $\Pi_{0.2}$, Bottom right: $\Pi_{0.3}$. VB and Oracle are in dotted lines.}
\label{FigEntropy}
\end{center}
\end{figure}

\paragraph{Conclusion on the accuracy of the estimates}

Studying the MSE indicator allows us to compare the methods in terms of classification. Except for the ``two-state-HMM'' approach, we highlight that all the proposed methods have quite similar behaviours. However, the VB method provides better results in terms of MSE and its standard deviation than does the PE approach. These results are very close to those of IS and even often better. The focus on the misclassification rate confirmed the closeness between our approach and that of IS. These methods have a quite similar misclassification rates whatever the simulation condition. Furthermore, this rate corresponds to the best rate among the three averaging approaches. 
The computational time is also a key point of these classification methods. Indeed, the VB method has a negligeable computational time compared with IS. This may further dramatically increase with the size of the data.

\section{Real data analysis}
\label{Appli}
\subsection{Description}

\paragraph{The data}
In this section, we focus on the analysis of a real dataset collected from public health surveillance systems. These data have also been studied in the recent paper of Cai et al. \cite{Cai2009}  using an FDR (False Discovery Rate) approach. The database is composed of 1216 time points. The data and log-transformation of them are shown in figure \ref{Data}.
The event described by the data can be classified into 2 groups: usual or unusual. These two groups correspond to a regular low rate and an irregular high rate respectively. Hence, the first group represents our group of interest and the other one the alternative. Moreover, it is clear that an event highly depends on the past and Strat and Carrat \cite{Strat} demonstrated that this kind of data can be described by using a two-state HMM. In this analysis, we thus aim at retrieving the two groups in the population and we want to estimate well the posterior probability of belonging to the group of interest.  

\paragraph{Initialisation of the algorithm}
To avoid any influence of the prior distributions, they have been chosen as described in Section \ref{Simulation design}. As considered in the simulation section, the alternative distribution has been fitted by a Gaussian mixture with common variance.  The number of components $m$ within the alternative distribution varies from 1 to 6 and the fixed distribution $\mathcal{N}(2.37,0.76^2)$ has been chosen according to results of Cai et al \cite{Cai2009}.

\subsection{Results}

For each number of component we infer the model parameters and estimate the weights with the VB method. The results we obtained are summarized in Table \ref{MixModel}.

\begin{table}
\scriptsize
\begin{center}
\begin{tabular}{|c||c|c|c|c|} 
   \hline
   m  &	mean & variance	& proportions & $\alpha^{VB}$  \\
   \hline
1&$4.9$	&1.1	&$1$	&$<10^{-4}$	\\
   \hline
2&$\left(\begin{array}{cc} 4.5, & 5  \\ \end{array} \right)$ &0.9	&	$\left(\begin{array}{cc} 0.67, & 0.33 \\ \end{array}\right) $ &
$<10^{-4}$	\\
   \hline
3&$\left(\begin{array}{ccc} 4, & 4.2, & 6  \\ \end{array} \right)$ &0.3	&$\left(\begin{array}{ccc} 0.32, & 0.32, & 0.34  \\ \end{array} \right)$ &$0.34$	\\
   \hline
4&$\left(\begin{array}{cccc} 3.9,  & 4.1, &  5, & 6.3 \\ \end{array} \right)$ &0.2	&$\left(\begin{array}{cccc} 0.22, & 0.27, & 0.26,& 0.25  \\ \end{array}\right) $ 	&$0.66$	\\
   \hline
5&$\left(\begin{array}{ccccc} 3.8, & 4, & 4.1, 5.2, & 6.4   \\ \end{array} \right)$&0.18	&$\left(\begin{array}{ccccc} 0.17,& 0.19, & 0.22, & 0.22, & 0.20 \\ \end{array}  \right)$	&$<10^{-4}$	\\
   \hline
6 & $\left( \begin{array}{cccccc} 3.8, & 4, & 4.1, & 4.8, & 5.6, & 6.5 \end{array} \right) $ &0.15	&$\left( \begin{array}{cccccc} 0.14, & 0.16, & 0.16, &0.20, & 0.16, & 0.18\\ \end{array} \right) $ 	&$<10^{-4}$	\\
   \hline
\end{tabular}
\caption{Parameter estimation of the Gaussian mixture within the alternative distribution $f$.}
\label{MixModel}
\end{center}
\normalsize
\end{table}

Every model presented in Table \ref{MixModel} has the same estimation of the transition matrix $\left(\begin{array}{cc} 0.96 & 0.04 \\ 0.04 & 0.96 \\ \end{array} \right)$.
In their article, Cai et al. selected a model with two heterogeneous Gaussian distributions for the alternative. In our approach, due to the homogeneous assumption, the number of components increases and we keep two models with three and four components respectively. The other models have a low weight, smaller than $10^{-4}$, and have no influence on the posterior probability estimation.
We now focus on the classification provided by the averaged distribution and the 3-component model proposed by Cai et al. \cite{Cai2009} and we notice that only 3 points differ between our approach from that of Cai. However if we focus on these three points, we observe that they correspond to points with a posterior probability close to 0.5. These points are on the borderline between the two classes. As our approach tends to increase the posterior probabilities (see figure \ref{AggreVSCai}), the epidemical ranges are greater with our approach. In two cases, the epidemics are declared earlier with the VB method than with that of Cai.

\begin{figure}
\begin{center}
\includegraphics[width=15cm,height=6cm]{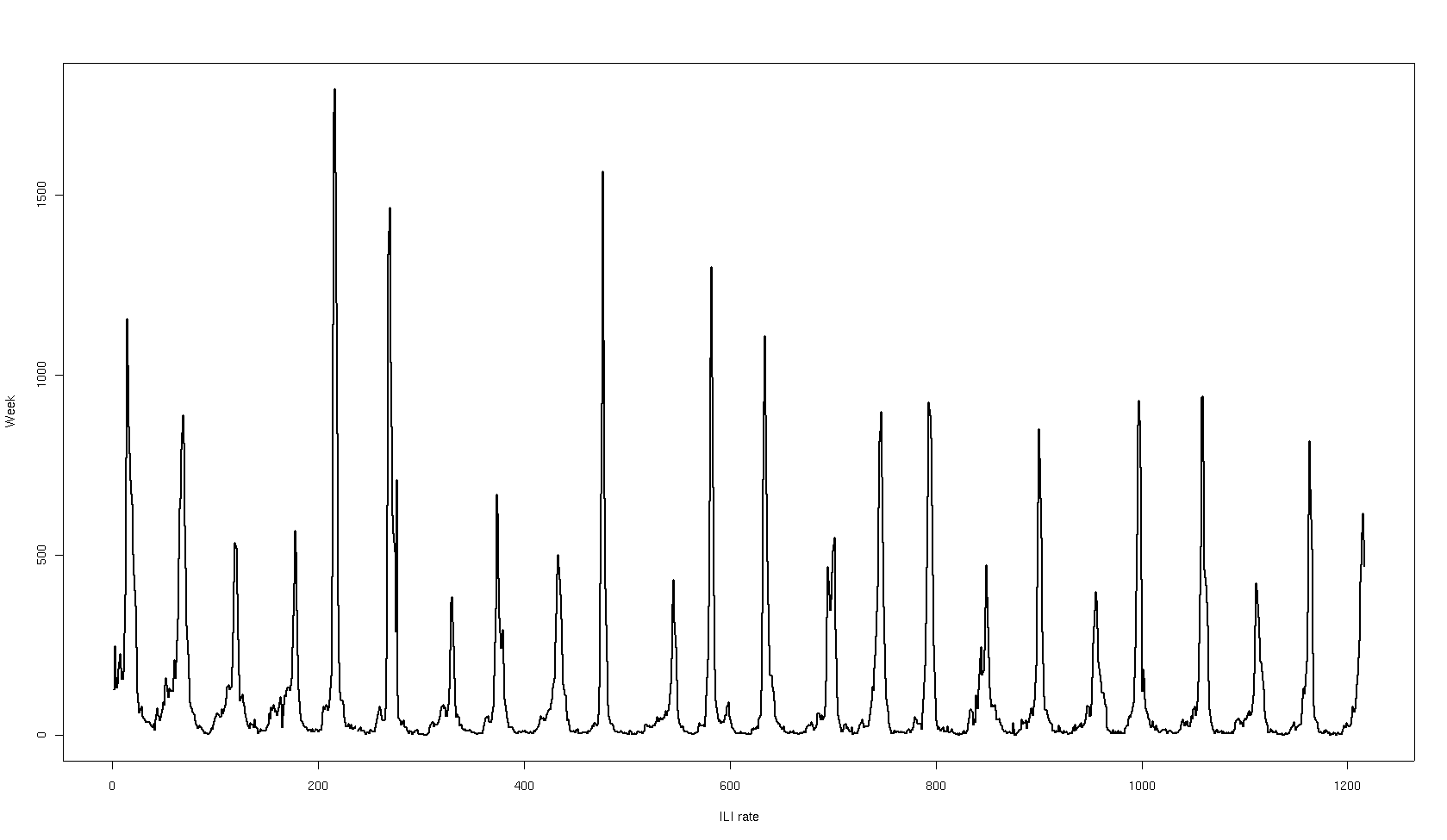}
\includegraphics[height=12cm,width=15cm]{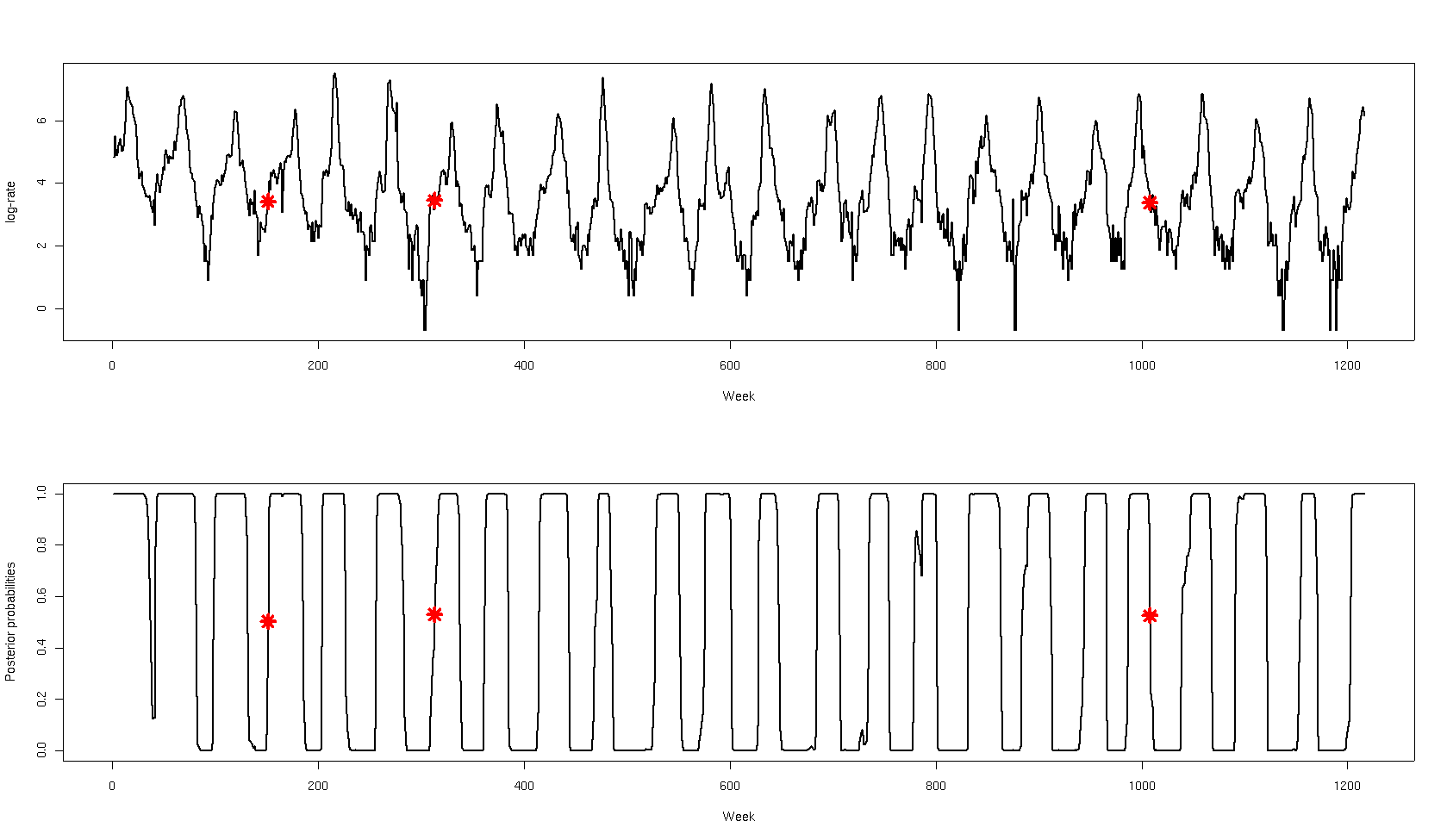}
\caption{top: weekly ILI rate,  middle: log-transformed weekly ILI rate, bottom: Aggregated posterior
probability of ILI epidemic over weeks. The three red points correspond to the points which have a different classification from one method to another.}
\label{Data}
\end{center}
\end{figure}

Figure \ref{AggreVSCai} displays the averaged posterior probabilities against the estimations obtained by the model proposed by Cai. The first comment is that the two approaches provide close estimations. This is especially the case for probabilities smaller than 0.3 or greater than 0.7. These ranges correspond to low entropy areas.
The main comment is that an averaging approach tends to refine posterior probabilities between 0.3 and 0.7. This high entropy area is considered as a difficult area for estimating the probabilities. In fact, it mainly corresponds to data points which are on the borderline between the two classes.

\begin{figure}
\begin{center}
\includegraphics[height=8cm,width=15cm]{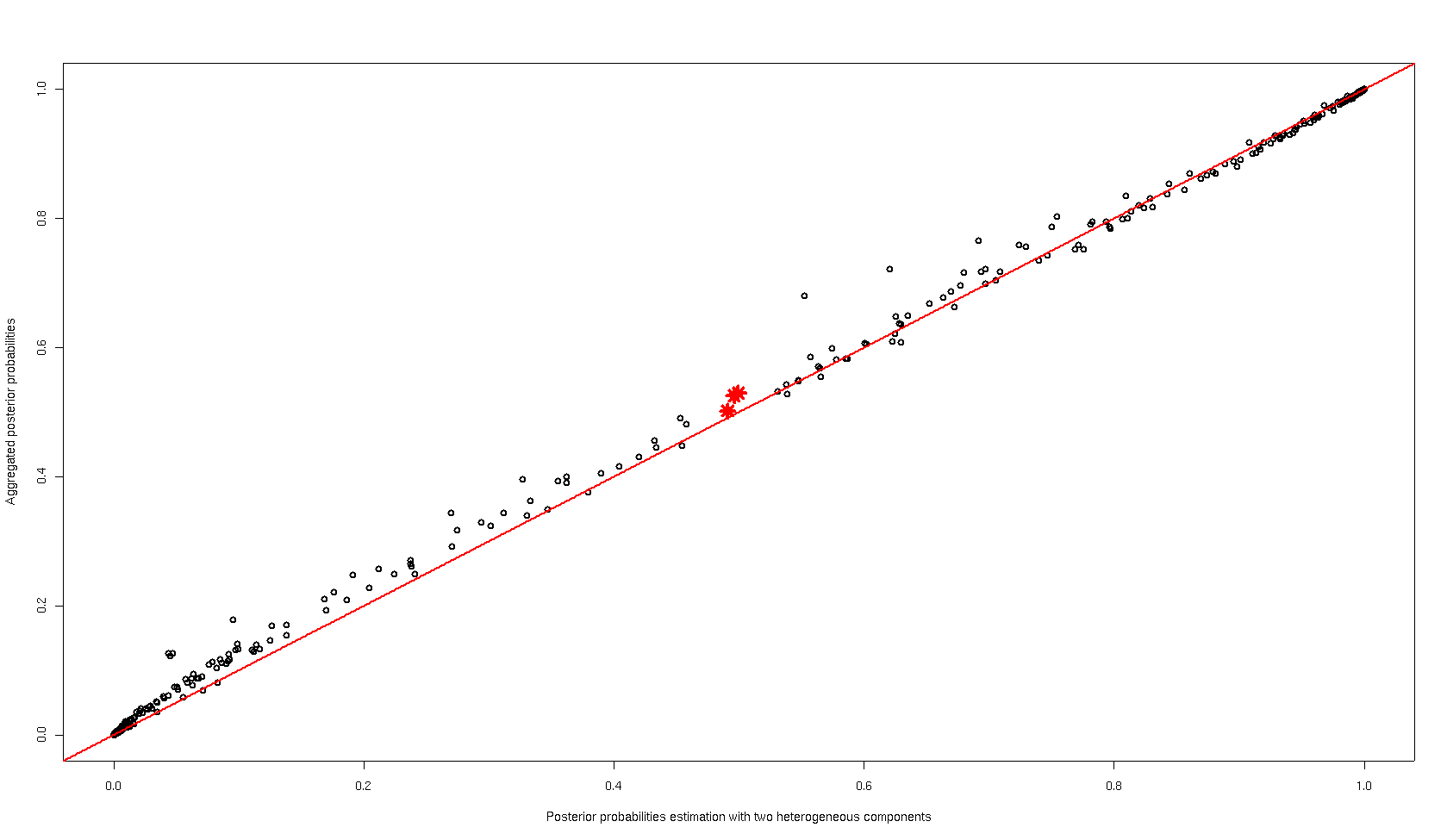}
\caption{Aggregated posterior probabilities according to the estimation of the posterior probabilities with the 2 heterogeneous components model. The three red points correspond to the points which have a different classification from one method to another.}
\label{AggreVSCai}
\end{center}
\end{figure}

\section{Conclusion}

We proposed a method for binary classification problems based on averaged estimators within a variational Bayesian framework. This approach allows us to avoid model selection and take model uncertainty into account. It can theoretically be proved that using an averaged estimator provides a gain in terms of MSE and increases the lower bound of the log-likelihood. We proposed a method based on optimal variational weights which derive from a modification of the classical lower bound of the log-likelihood. Our method does not required more computational time than classical one. For studying the performances, the method has been used on both synthetic and real data.

The results we obtained on synthetic data showed that our method enhances the estimator in terms of MSE in many simulation conditions. We also highlighted that the averaging approach improves the posterior probability estimation provided by the classical selection approach.  Moreover, we showed that optimal variational weights are closer to importance sampling than the plug-in one. Since the importance sampling coped with computational time problems for high dimensional datasets, our method is of significant interest in this case.

A real data analysis has been carried out on a clinical dataset. In this context, the aggregation model still refines the estimation of posterior probabilities. We note in particular that the classification is different in cases where the probability is close to 0.5, i.e. when the classification is difficult. It allows us to refine the start of the epidemic period.

\newpage
\bibliographystyle{plain}
\bibliography{References}

\end{document}